\theoremstyle{plain}
\newtheorem{theorem}{Theorem}[section]
\newtheorem{lemma}[theorem]{Lemma}
\theoremstyle{definition}
\theoremstyle{remark}
\icmltitlerunning{DiJiang: Efficient Large Language Models through Compact Kernelization}
\begin{document}

\twocolumn[
\icmltitle{DiJiang: Efficient Large Language Models through Compact Kernelization}



\icmlsetsymbol{equal}{*}

\begin{icmlauthorlist}
\icmlauthor{Hanting Chen}{equal,yyy}
\icmlauthor{Zhicheng Liu}{equal,yyy}
\icmlauthor{Xutao Wang}{yyy}
\icmlauthor{Yuchuan Tian}{pku}
\icmlauthor{Yunhe Wang}{yyy}
\end{icmlauthorlist}

\begin{center}
	\{chenhanting,yunhe.wang\}@huawei.com;
\end{center}

\icmlaffiliation{yyy}{Huawei Noah’s Ark Lab}
\icmlaffiliation{pku}{Peking University}

\icmlcorrespondingauthor{Yunhe Wang}{yunhe.wang@huawei.com}

\icmlkeywords{Machine Learning, ICML}

\vskip 0.3in
]



\printAffiliationsAndNotice{\icmlEqualContribution} 

\begin{abstract}
In an effort to reduce the computational load of Transformers, research on linear attention has gained significant momentum. However, the improvement strategies for attention mechanisms typically necessitate extensive retraining, which is impractical for large language models with a vast array of parameters. In this paper, we present DiJiang, a novel Frequency Domain Kernelization approach that enables the transformation of a pre-trained vanilla Transformer into a linear complexity model with little training costs. By employing a weighted Quasi-Monte Carlo method for sampling, the proposed approach theoretically offers superior approximation efficiency. To further reduce the training computational complexity, our kernelization is based on Discrete Cosine Transform (DCT) operations. Extensive experiments demonstrate that the proposed method achieves comparable performance to the original Transformer, but with significantly reduced training costs and much faster inference speeds. Our DiJiang-7B achieves comparable performance with LLaMA2-7B on various benchmark while requires only about 1/50 training cost. Code is available at \url{https://github.com/YuchuanTian/DiJiang}.
\end{abstract}

\section{Introduction}
The Transformer architecture~\cite{vaswani2017attention} has revolutionized the field of Natural Language Processing (NLP), achieving outstanding results in various tasks such as speech recognition~\cite{dong2018speech}, machine translation~\cite{wang2019learning}, and document generation/summarization~\cite{kim2022ocr}. This success has led to an era dominated by large language models (LLMs), where the Transformer structure is scaled up to handle increasingly complex tasks. However, this scaling brings with it substantial computational demands, especially due to the attention mechanism which requires cross-correlation calculations between each token. These computational requirements, coupled with the significant inference costs and energy consumption, present considerable obstacles to deploying these models in resource-constrained environments like mobile devices and robotics.

In response to the pressing need for more efficient Transformer models, the research community has directed its efforts towards optimizing the Transformer architecture. A myriad of strategies has been put forward, encompassing methods such as model pruning, quantization, and the development of more efficient attention mechanisms. Among these initiatives, simplifying the attention mechanism has emerged as a particularly promising avenue. This approach focuses on transforming the traditionally quadratic complexity of attention mechanisms into a more manageable linear scale. ~\cite{katharopoulos2020transformers} introduces Linear Transformers, which leverage kernel feature maps to transform self-attention, reducing complexity from quadratic to linear while maintaining comparable results to traditional Transformers.~\cite{kitaev2020reformer} proposes replacies dot-product attention with locality-sensitive hashing and using reversible residual layers to minimize memory usage in training. Performer~\cite{choromanski2020rethinking} utilize positive orthogonal random features to approximate softmax-based self-attention in Transformers, achieving a transformative leap to linear complexity. 

However, the majority of existing methods for optimizing Transformers, particularly in relation to their attention mechanisms, necessitate comprehensive retraining. This retraining process presents a formidable challenge, especially for models with an immense array of parameters. It requires a significant investment in terms of computational resources and time. For instance, the training of a large model like LLaMA-7B~\cite{touvron2023llama} demands approximately 82,432 GPU-hours and incurs a total power consumption of around 36 MWh. Undertaking such extensive retraining for models of this magnitude is not only economically taxing but also raises environmental concerns due to the substantial energy expenditure involved. This underscores the need for more efficient approaches to adapt and optimize these large-scale models. Undertaking such extensive retraining for models of this magnitude is not only economically taxing but also raises environmental concerns due to the substantial energy expenditure involved. Despite few research~\cite{zheng2023efficient,choromanski2020rethinking} efforts focusing on finding fast approximations for attention mechanisms, these methods have not been thoroughly validated in large-scale language models.

To address the issue of fast attention approximations in large language models, we conducted a thorough analysis of existing linear attention schemes. We discovered that the main source of approximation error in these methods is due to sampling based on the Monte Carlo method. Consequently, we propose the use of weighted Quasi-Monte Carlo sampling for mapping, specifically introducing Frequency Domain Kernelization. This approach efficiently and accurately maps the queries and keys of a Transformer to the frequency domain using Discrete Cosine Transform (DCT). This mapping allows us to effectively eliminate the softmax operation in the attention mechanism, rendering the attention computation linear in complexity, which is shown in Figure~\ref{fig:method}. We theoretically demonstrate that this frequency domain mapping is an approximate equivalent to the original attention mechanism. Our experiments show that our method achieves performance comparable to the original Transformer with a significantly smaller training cost ($<1/10$), while also benefiting from faster inference speeds (up to about 10x). 

\section{Related Works}

\subsection{Linear Transformers}

Reducing the computational load of attention in Transformers remains a hot topic in research.~\cite{child2019generating} achieved this by sparsifying attention, thereby reducing its computational cost. Similarly,~\cite{kitaev2020reformer} used locality-sensitive hashing to expedite the computation of attention. However, these methods are hard to apply in auto-regressive Transformer models. As a result, there has been a series of works focusing on removing or substituting the softmax in attention. Notably, the Linear Transformer, first introduced by~\cite{katharopoulos2020transformers}, represents a significant stride in this direction.~\cite{qin2022cosformer} approximated attention calculations using a linear operator and a cosine-based distance reweighting.~\cite{zhai2021attention} achieved linear complexity in Transformers by preprocessing keys and values.~\cite{lu2021soft} used Gaussian kernel functions in place of dot-product similarity, allowing for the approximation of the full self-attention matrix through low-rank matrix decomposition.~\cite{bello2021lambdanetworks} bypassed the need for attention calculations by capturing interactions through transforming available contexts into linear functions and applying them to each input, showcasing the variety of methods explored to optimize attention mechanisms in Transformer models.

Additionally, recent proposals like RWKV~\cite{peng2023rwkv}, RetNet~\cite{sun2023retentive}, and Mamba~\cite{gu2023mamba} have introduced potential alternatives to the Transformer with linear complexity. However, these existing improvements typically require significant modifications to the model's architecture and often necessitate training a new model from scratch to achieve optimal performance. Given the substantial training costs associated with large language models, such retraining is not always feasible. While methods like StreamingLLM~\cite{xiao2023efficient} or Longformer~\cite{beltagy2020longformer} can be implemented through fine-tuning, their reliance on window attention compromises their ability to truly model long sequences, leading to a decrease in accuracy. This highlights the challenge of balancing model training efficiency with the ability to maintain high performance in handling long sequences.

\subsection{Frequency-based Transformers}

A various of research has focused on applying the Transformer architecture in the frequency domain. For instance, FNet~\cite{lee2021fnet} replaces the self-attention in BERT with Fourier Transform, significantly speeding up Transformer computations. A similar concept~\cite{buchholz2022fourier} has been adapted for image processing tasks. DCFormer~\cite{li2023discrete} proposes a Transformer-based network that learns semantic representations directly from frequency domain representations using Discrete Cosine Transform (DCT). In the realm of video prediction, ideas like the local frequency domain transformer~\cite{farazi2021local} have been introduced. However, applying these concepts to existing decoder-only large language models presents challenges. The auto-regressive inference style of these models makes token-level frequency domain transformations cumbersome. Each new token requires frequency domain transformation in conjunction with all previous tokens, which fails to reduce complexity and undermines the potential efficiency gains of frequency domain approaches in large-scale language models.

\begin{figure*}
	\centering
	\includegraphics[width=1.0\linewidth]{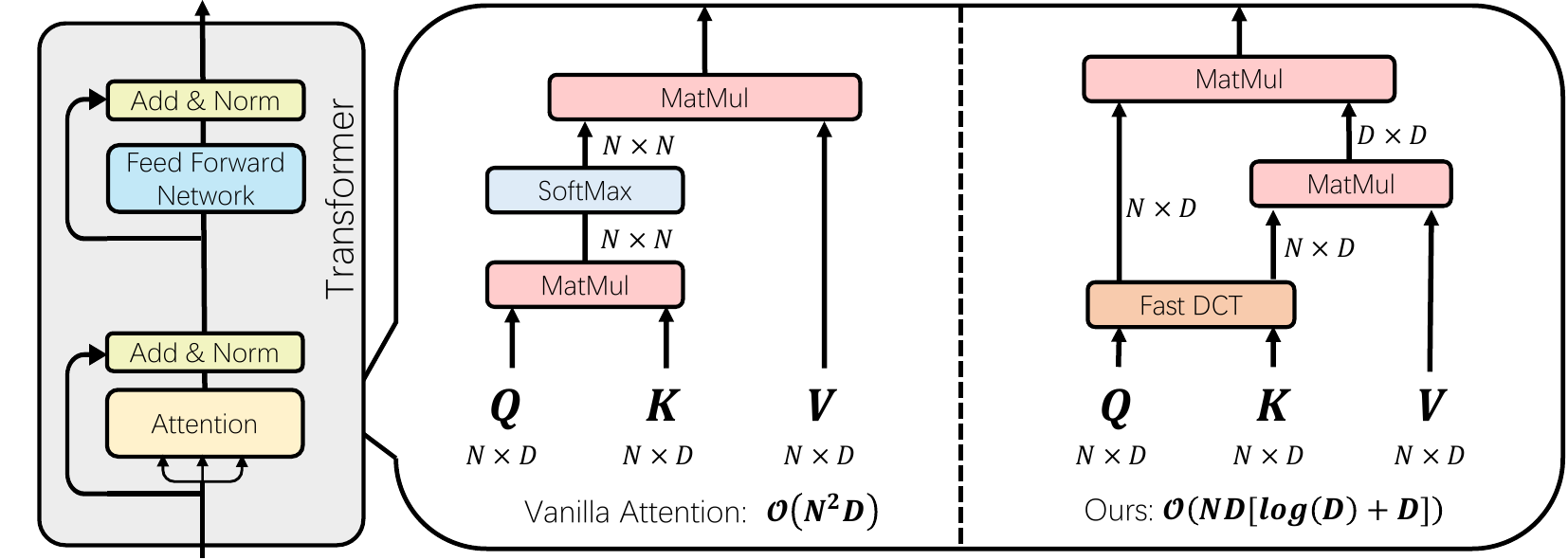}
	\vspace{-1.5em}
	\caption{Illustration of the proposed method, where the computation of queries and keys in the attention mechanism of a Transformer is efficiently mapped to the frequency domain using a fast Discrete Cosine Transform (DCT). This mapping effectively eliminates the softmax operation, thereby substantially reducing the computational complexity of the Transformer.}
	\label{fig:method}
	\vspace{-0.5em}
\end{figure*}

\section{Kernelized Attention in Frequency Domain}


In our study, we begin by revisiting the general form of self-attention~\cite{vaswani2017attention}. To simplify the notation and focus on the core aspects, we consider the single head form of self-attention and omit normalization factors. The self-attention mechanism is fundamentally composed of queries $Q$, keys $K$, and values $V$, expressed in the formula: 
\begin{equation}
	\begin{aligned}
		&\mbox{Attention}(Q, K, V) = \mbox{softmax}(QK^\intercal)V, \\
		&\mbox{where } Q,K,V\in\mathbb{R}^{n\times d},
	\end{aligned}
\label{attn}
\end{equation}
where $n$ denotes the number of tokens and $d$ denotes the hidden dimension of the attention. Specifically, when we denote $Q$ as $(q_1, q_2, ..., q_n)$, $K$ as $(k_1, k_2, ..., k_n)$, $V$ as $(v_1, v_2, ..., v_n)$, and output $O$ as $(o_1, o_2, ..., o_n)$, Equation~\ref{attn} can be reformulated as:
\begin{equation}
	\begin{aligned}
		&o_i = \sum_{j=1}^{n}\frac{e^{q_i k_j^\intercal}}{\sum_{j'=1}^n e^{q_i k_{j'}^\intercal}}v_j, \\
		&\mbox{where } q_i,k_i,v_i\in\mathbb{R}^{1\times d}, i=\{1,2,...,n\}.
	\end{aligned}
	\label{attn_1}
\end{equation}

It can be observed that the computational and memory complexity for calculating each output in a Transformer model is $\mathcal{O}(nd)$, where $n$ is the sequence length and $d$ is the dimensionality of the representation. Consequently, the time and memory complexity for processing a sentence of length $n$ scales quadratically, becoming $\mathcal{O}(n^2d)$. This quadratic scaling poses a significant computational burden, particularly for longer sequences where $n$ is large, making processing resource-intensive and challenging.

To mitigate this complexity, the concept of a kernel mechanism has been introduced as a means to reduce the computational demands of attention mechanisms, which has been introduced in~\cite{tsai2019transformer,katharopoulos2020transformers,choromanski2020rethinking}. Specifically, this involves the introduction of a kernel function ${K}(\cdot,\cdot)$, which acts as a positive-definite kernel capable of measuring similarity. By utilizing this kernel, the attention mechanism can be reformulated as:
\begin{equation}
		o_i = \sum_{j=1}^{n}\frac{{K}(q_i,k_j)}{\sum_{j'=1}^n \mathcal{K}(q_i,k_{j'})}v_j, 
	\label{kerattn}
\end{equation}

By applying the kernel trick, it's possible to linearly decompose the attention mechanism:
\begin{equation}
	o_i = \sum_{j=1}^{n}\frac{\phi(q_i)\phi(k_j)^\intercal}{\sum_{j'=1}^n \phi(q_i)\phi(k_{j'})^\intercal}v_j, 
	\label{linattn}
\end{equation}
where $\phi(\cdot):\mathbb{R}^d\rightarrow \mathbb{R}^m$ is a projection to map the inputs into $m$ dimension features. This decomposition benefits from the fact that the computational dimensions of the keys and values can be merged, effectively reducing the computational complexity from $\mathcal{O}(n^2d)$ to $\mathcal{O}(nmd)$. Given that the dimensionality $d$ and $m$ is typically much smaller than the sequence length $n$, this linearization of the attention mechanism results in a substantial decrease in computational intensity.

In the context of large language models, the cost of retraining is prohibitively high. In such scenarios, it becomes imperative to find a kernel that can equivalently replace the vanilla attention mechanism without necessitating extensive retraining. Positive Random Features (PRF)~\cite{choromanski2020rethinking} emerge as a viable candidate in this regard:
\begin{equation}
	\phi_{\mbox{\tiny PRF} }(x) = e^{\omega x^\intercal-\frac{\Vert x\Vert^2}{2}},
	\label{PRF}
\end{equation}
where $\omega\in\mathbb{R}^{m\times d}$. Theoretical demonstrations have established that $e^{q k^\intercal} = \mathbb{E}_{\omega \sim \mathcal{N}(0.I)} [e^{\omega q^\intercal-\frac{\Vert q\Vert^2}{2}} e^{\omega k^\intercal-\frac{\Vert k\Vert^2}{2}} ]$. It means that when $m$, the dimension of the feature space, is sufficiently large, Positive Random Features (PRF) mapping becomes an equivalent of the original attention mechanism. This equivalence suggests that, in theory, it is feasible to directly transform existing vanilla attention into linear attention using PRF mapping, thereby achieving an acceleration without loss of functionality. However, a notable challenge arises due to the need for $m$ to be set to a significantly large value to maintain the performance by reducing the approximation error. This requirement leads to a non-negligible increase in computational demand. For instance, in the case of the Performer~\cite{choromanski2020rethinking}, to achieve a lossless linear attention, $m$ often needs to be set to larger than $d$, diminishing the benefits of reduced computational load brought by linear attention.

To address this issue, we first conduct a theoretical analysis of the kernel-based approach for approximating attention mechanisms. We begin with the application of Bochner’s Theorem. This theorem allows us to equate the original attention computation involving queries (Q) and keys (K) – specifically the Gaussian kernel – to an integral computation akin to Equation~\ref{linattn}. 
\begin{theorem} 
	(Bochner's Theorem)~\cite{feller1966introduction}. A continuous shift invariant scaled kernel function $K(x,z): \mathbb{R}^d \to R$ is positive definite if and only if it is the Fourier Transform of a unique finite probability measure $p$ on $\mathbb{R}^d$.
	\begin{equation}
		\small
		K(x,z) = \int_{\mathbb{R}^d} e^{i(x-z)^\intercal w}p(w)dw = E_{w \sim p(\cdot)}[e^{iw^\intercal x}(e^{iw^\intercal z})^*],
		\label{kernel}
	\end{equation}
	where the symbol $z^*$ denotes the complex conjugate of $z$.
\end{theorem}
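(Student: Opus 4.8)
The plan is to prove the two implications separately. Write $\varphi(u) := K(x,z)$ with $u = x - z$, so $\varphi \colon \mathbb{R}^d \to \mathbb{C}$ is continuous; ``positive definite'' means that $\sum_{j,k=1}^{N} c_j \bar{c}_k\, \varphi(x_j - x_k) \ge 0$ for every finite set of points $x_1,\dots,x_N \in \mathbb{R}^d$ and scalars $c_1,\dots,c_N \in \mathbb{C}$, and the word ``scaled'' fixes $\varphi(0) = 1$ so that the measure produced is a probability measure. The ``if'' direction is a one-line verification: if $\varphi(u) = \int_{\mathbb{R}^d} e^{i u^\intercal w}\, p(w)\, dw$ for a finite nonnegative $p$, then
\[
\sum_{j,k} c_j \bar{c}_k\, \varphi(x_j - x_k) \;=\; \int_{\mathbb{R}^d} \Bigl| \sum_j c_j\, e^{i x_j^\intercal w} \Bigr|^2 p(w)\, dw \;\ge\; 0,
\]
continuity of $\varphi$ follows from dominated convergence (as $p$ is finite), and $\varphi(0) = p(\mathbb{R}^d) = 1$.

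For the converse I would use a damping-and-compactness argument. First record the two-point consequences of positive definiteness: $\varphi(-u) = \overline{\varphi(u)}$, $\varphi(0) \ge 0$, and $|\varphi(u)| \le \varphi(0)$. Next, fix $\epsilon > 0$ and set $\varphi_\epsilon(u) := \varphi(u)\, e^{-\epsilon \|u\|^2 / 2}$. Since a Gaussian is positive definite (by the ``if'' direction) and a pointwise product of positive semidefinite matrices is positive semidefinite (Schur product), $\varphi_\epsilon$ is again continuous and positive definite, and now it is integrable. For such a function I would show that its (suitably normalized) Fourier transform $g_\epsilon := \widehat{\varphi_\epsilon}$ is pointwise nonnegative: passing from Riemann sums to integrals in the positive-definiteness inequality gives $\iint \varphi_\epsilon(x - y)\, f(x)\, \overline{f(y)}\, dx\, dy \ge 0$ for every Schwartz $f$, and Fourier inversion rewrites this double integral as $\int g_\epsilon(w)\, |\widetilde{f}(w)|^2\, dw$; since $|\widetilde{f}|^2$ can be made to concentrate near any prescribed point, continuity of $g_\epsilon$ forces $g_\epsilon \ge 0$ everywhere. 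Hence $d\mu_\epsilon := g_\epsilon(w)\, dw$ is a nonnegative measure with $\mu_\epsilon(\mathbb{R}^d) = \varphi_\epsilon(0) = 1$ (Fourier inversion at $u = 0$) and $\varphi_\epsilon(u) = \int e^{i u^\intercal w}\, d\mu_\epsilon(w)$.

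Finally I would let $\epsilon \downarrow 0$. All the $\mu_\epsilon$ are probability measures, and a standard tightness estimate — controlling $\mu_\epsilon(\{\|w\| > R\})$ by an average of $1 - \mathrm{Re}\,\varphi_\epsilon$ over a small ball around the origin, which is small for $\epsilon$ small because $\varphi$ is continuous at $0$ — shows the family is tight. By Prokhorov's theorem a subsequence $\mu_{\epsilon_k}$ converges weakly to a probability measure $p$; testing against the bounded continuous function $w \mapsto e^{i u^\intercal w}$ gives $\varphi_{\epsilon_k}(u) \to \int e^{i u^\intercal w}\, dp(w)$, while $\varphi_{\epsilon_k}(u) \to \varphi(u)$ pointwise, so $\varphi(u) = \int e^{i u^\intercal w}\, dp(w)$. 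Uniqueness of $p$ is the injectivity of the Fourier transform on finite measures. The main obstacle is the nonnegativity of $g_\epsilon$ together with the tightness step: the damping is precisely what makes Fourier inversion legitimate, and the continuity of $\varphi$ at the origin is exactly the ingredient needed to upgrade the pointwise bounded limit to an honest limiting probability measure. Both steps are classical (see, e.g., Rudin's \emph{Fourier Analysis on Groups} or Folland's \emph{A Course in Abstract Harmonic Analysis}), but they are where all the analytic content resides.
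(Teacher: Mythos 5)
The paper does not prove this theorem. It is stated as a classical result with a citation to \cite{feller1966introduction} and used downstream as a black box to connect shift-invariant kernels to Fourier integrals; there is no in-paper argument to compare against.

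Your blind proof is a correct rendition of the standard proof of Bochner's theorem, and the overall architecture is sound: the easy (``if'') direction by writing the quadratic form as $\int \bigl|\sum_j c_j e^{ix_j^\intercal w}\bigr|^2 p(w)\,dw$; the hard direction via Gaussian damping (using the Schur product theorem to preserve positive definiteness), Parseval to identify the sign of $g_\epsilon = \widehat{\varphi_\epsilon}$, and then a L\'evy tightness estimate plus Prokhorov to extract the limiting probability measure; uniqueness from injectivity of the Fourier transform on finite measures. Two small technical points that a fully rigorous write-up would need to pin down, both of which you implicitly acknowledge: (i) the Parseval identity $\iint \varphi_\epsilon(x-y) f(x)\overline{f(y)}\,dx\,dy = c\int g_\epsilon(w)|\widehat{f}(w)|^2\,dw$ should be justified via $\varphi_\epsilon\in L^1\cap L^2$ and Plancherel (rather than pointwise Fourier inversion, since $g_\epsilon\in L^1$ is not yet known at that stage); and (ii) once $g_\epsilon\ge 0$ is established, one still needs $\int g_\epsilon<\infty$ (e.g.\ by a Gaussian summability argument and monotone convergence) before $d\mu_\epsilon = g_\epsilon\,dw$ is a finite measure and $\mu_\epsilon(\mathbb{R}^d)=\varphi_\epsilon(0)=1$ follows from inversion. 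Neither is a gap in the idea, just a place where the order of steps matters. One further remark on the statement itself: as written the theorem says the kernel is real-valued, which forces $p$ to be symmetric under $w\mapsto -w$; your proof works in the general complex-valued setting and specializes correctly.
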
 


According to Bochner's theorem, there is a one-to-one correspondence between the kernel function $K(x, z)$ and the probability density $p(w)$ defined on $\mathbb{R}^d$. Monte Carlo is equal weight approximation to kernel integrals. Taking $\varphi_p(x) :=\frac{1}{\sqrt{m}}[e^{-iw^\intercal _1x},...,e^{-iw^\intercal _mx}] ^\intercal $,  the feature maps can be constructed as:
\begin{equation}
	K(x,z) = E_{w \sim p(\cdot)}[e^{iw^\intercal x}(e^{iw^\intercal z})^*]\approx \varphi_p(x)^\intercal \varphi_p^*(z),
	\label{kernel-approx}
\end{equation}

where $w_i\sim p(\cdot) $ are samples constructed by Monte Carlo methods. $\varphi_p(\cdot)$ is the explicit finite dimensional feature map, which depends on the kernel $K$. Moving forward, instead of employing the Monte Carlo method as suggested in ~\cite{choromanski2020rethinking}, we utilize the Quasi-Monte Carlo method~\cite{le2013fastfood}. This shift enables the estimation of the integral using a specific uniform distribution as opposed to a randomly sampled distribution. 

Utilizing Bochner's theorem allows for a transformative interpretation of the attention mechanism in Transformer models. For the Gaussian Kernel:
\begin{equation}
	K_G(x, y) := e^{-\frac{\Vert x-y\Vert^2}{2}} = e^{-\frac{\Vert x\Vert^2+\Vert y\Vert^2}{2}}e^{x^\intercal y},\\
	\label{Gauss}
\end{equation}
since the $x$ and $y$ in attention mechanism is usually normalized, the Gaussian Kernel can be regarded as $e^{x^\intercal y}$, which is the same as the calculation between the queries and keys.

\begin{theorem}
	The Positive Fixed Features (PFF) is formulated as:
	\begin{equation}
		\varphi_{\mbox{\tiny PFF}}(x) :=\frac{e^{-\Vert x\Vert^2}}{\sqrt{m}}[e^{\Phi^{-1}(t_1)x^\intercal v_1},...,e^{\Phi^{-1}(t_m)x^\intercal v_m}] ^\intercal,
		\label{PFF}
	\end{equation}
	where $V=[v_1,...,v_m]\in \mathbb{S}^{d\times m}$ is asymptotically uniformly distributed and $t_i\sim U(0,1)$. Then, $\varphi_{\mbox{\tiny PFF}}(x)^\intercal \varphi_{\mbox{\tiny PFF}}(z)$ is an unbiased estimate of Gaussian kernel $K_G(x, y)$.
\end{theorem}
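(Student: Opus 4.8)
The plan is to reduce the claim to a one‑dimensional moment‑generating‑function identity for the Gaussian spectral measure supplied by Bochner's Theorem, and then to match the normalizing prefactors algebraically.

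\textbf{Step 1 (Reduction to a single feature).} Expanding the inner product and using $x^\intercal v_i + z^\intercal v_i = (x+z)^\intercal v_i$ gives $\varphi_{\mathrm{PFF}}(x)^\intercal\varphi_{\mathrm{PFF}}(z) = \frac{e^{-\Vert x\Vert^2-\Vert z\Vert^2}}{m}\sum_{i=1}^m e^{\Phi^{-1}(t_i)\,(x+z)^\intercal v_i}$. The prefactor is deterministic and the $m$ summands are identically distributed once the columns of $V$ are regarded as drawn from the uniform law on the sphere (the distribution the asymptotically‑uniform point set approximates) and the $t_i$ as i.i.d.\ $U(0,1)$. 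Hence, writing $u:=x+z$, by linearity of expectation it suffices to show $\mathbb{E}_{t,v}\big[e^{\Phi^{-1}(t)\,u^\intercal v}\big]=e^{\Vert u\Vert^2/2}$, since a short computation shows $e^{-\Vert x\Vert^2-\Vert z\Vert^2}e^{\Vert x+z\Vert^2/2}=e^{-(\Vert x\Vert^2+\Vert z\Vert^2)/2}e^{x^\intercal z}=K_G(x,z)$ by \eqref{Gauss}.

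\textbf{Step 2 (The expectation).} Here I would invoke inverse‑transform sampling: $\Phi^{-1}(t)$ is designed to carry exactly the radial law for which $\omega:=\Phi^{-1}(t)\,v$, with $v$ uniform on $\mathbb{S}^{d-1}$, is distributed as $\mathcal N(0,I_d)$ — i.e.\ as the spectral measure that Bochner's Theorem \eqref{kernel} assigns to the Gaussian kernel $K_G$. Under that identification the target identity is just the real‑exponential (non‑oscillatory) analogue of \eqref{kernel}, namely $\mathbb{E}_{\omega\sim\mathcal N(0,I_d)}[e^{\omega^\intercal u}]=e^{\Vert u\Vert^2/2}$ — the standard Gaussian MGF, which is also the identity underpinning the PRF map \eqref{PRF}. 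Equivalently, one may condition on $v$, apply the scalar MGF $\mathbb{E}_t[e^{\Phi^{-1}(t)(u^\intercal v)}]=e^{(u^\intercal v)^2/2}$, and then integrate over the direction $v$ against the uniform surface measure.

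\textbf{Step 3 (Wrap‑up and the main obstacle).} Substituting back, expanding $\Vert x+z\Vert^2=\Vert x\Vert^2+2x^\intercal z+\Vert z\Vert^2$, and multiplying by the prefactor collapses everything to $\mathbb{E}[\varphi_{\mathrm{PFF}}(x)^\intercal\varphi_{\mathrm{PFF}}(z)]=e^{-(\Vert x\Vert^2+\Vert z\Vert^2)/2}e^{x^\intercal z}=K_G(x,z)$, which is unbiasedness. The crux is Step 2: one must verify that the precise construction $\Phi^{-1}(t_i)\,v_i$ actually reproduces the standard Gaussian spectral measure, which pins down both that $\Phi^{-1}(t)$ must have the $\chi_d$ law of $\Vert\omega\Vert$ for $\omega\sim\mathcal N(0,I_d)$ (rather than a one‑dimensional normal), and that the directional factor must integrate against the uniform measure on $\mathbb{S}^{d-1}$. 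Because the quasi‑Monte Carlo point set $V$ is only \emph{asymptotically} uniform, the exact ``unbiased'' statement is cleanest in the idealization where the columns of $V$ are genuinely i.i.d.\ uniform (or in the $m\to\infty$ limit); quantifying the finite‑$m$ gap is a Koksma--Hlawka‑type discrepancy bound and is the technical heart of the argument, and it is precisely what backs the paper's claim that weighted quasi‑Monte Carlo sampling improves on plain Monte Carlo.
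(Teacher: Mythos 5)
Your main line of reasoning matches the paper's proof: reduce the inner product to the expectation of a single summand $e^{\Phi^{-1}(t)(x+z)^\intercal v}$, recognize that $\omega:=\Phi^{-1}(t)\,v$ with $v$ uniform on $\mathbb{S}^{d-1}$ and $t\sim U(0,1)$ is exactly the spherical--radial decomposition of $\mathcal{N}(0,I_d)$, apply the Gaussian moment-generating identity $\mathbb{E}_{\omega\sim\mathcal{N}(0,I_d)}[e^{\omega^\intercal u}]=e^{\Vert u\Vert^2/2}$, and collapse the prefactors to $K_G(x,z)$. The paper writes the same computation out in spherical coordinates and then closes it by completing the square in the Gaussian integral, and it handles the point that the columns of $V$ are only asymptotically uniform by taking an explicit $m\to\infty$ limit against the Brauchart--Grabner asymptotic-equidistribution property, which is exactly the idealization you correctly flag in Step 3. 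So substance and structure agree.

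One correction is needed in Step 2. The ``Equivalently'' variant is wrong as stated: conditioning on $v$ leaves $\mathbb{E}_t\bigl[e^{\Phi^{-1}(t)\,(u^\intercal v)}\bigr]$ where $\Phi^{-1}(t)$ carries the $\chi_d$ radial law (the inverse CDF of $\Vert\omega\Vert$ for $\omega\sim\mathcal{N}(0,I_d)$), not a univariate standard normal, so the termwise identity $e^{(u^\intercal v)^2/2}$ does not hold --- indeed you yourself note a few lines later that $\Phi^{-1}(t)$ must have the $\chi_d$ law, which contradicts that display. The conditional expectation is instead the $\chi_d$ MGF evaluated at $u^\intercal v$, and only after integrating that over $v\sim\sigma(\mathbb{S}^{d-1})$ do you recover $e^{\Vert u\Vert^2/2}$. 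The first, unconditional reading of Step 2 (treat $\Phi^{-1}(t)\,v$ jointly as a Gaussian draw) is the one that actually closes the argument and is the one the paper uses.
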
 

The proof of this theorem involves a transformation to spherical coordinates, which can be found in the supplementary material. Through this transformation, we demonstrate that an approximation based on any asymptotically uniformly distribution can closely approximate the original Gaussian kernel. Furthermore, according to~\cite{asmussen2007stochastic}, when utilizing uniform sequences, the Quasi-Monte Carlo method can offer superior approximation efficiency compared to the traditional Monte Carlo method. The approximation efficiency of Quasi-Monte Carlo is $\mathcal{O}(1/m)$, which is more favorable than the $\mathcal{O}(1/m^{-0.5})$ efficiency of Monte Carlo. Consequently, this implies that using the PFF~\ref{PFF} kernel for approximating the Gaussian kernel is more advantageous than the PRF kernel in Equation~\ref{PRF}.

\begin{theorem}
	The Weighted Positive Fixed Features (WPFF) is formulated as:
	\begin{equation}
		\small
		\varphi_{\mbox{\tiny WPFF}}(x) :=\frac{De^{-\Vert x\Vert^2}}{\sqrt{m}}[e^{\Phi^{-1}(t_1)x^\intercal v_1},...,e^{\Phi^{-1}(t_m)x^\intercal v_m}] ^\intercal,
		\label{WPFF}
	\end{equation}
	where $D$ is a learnable parameter which can be optimized by the input $x$. Then the upper bound of the integral estimation error of the objective function by WPFF (Weighted Positive Fixed Features) method is not greater than the upper bound of the integral estimation error of the objective function by PFF (Positive Fixed Features) method. 
\end{theorem}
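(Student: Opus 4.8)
The plan is to exploit the fact that the WPFF feature map in Equation~\ref{WPFF} contains the PFF feature map in Equation~\ref{PFF} as the special case where the learnable weight $D$ is the identity, so that optimizing over $D$ can only improve upon, or at worst match, the error bound already established for PFF.

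First I would write the two kernel estimators explicitly. From Equation~\ref{PFF} we have $\widehat{K}_{\mbox{\tiny PFF}}(x,z) = \varphi_{\mbox{\tiny PFF}}(x)^\intercal \varphi_{\mbox{\tiny PFF}}(z) = \tfrac{e^{-\Vert x\Vert^2-\Vert z\Vert^2}}{m}\sum_{i=1}^m e^{\Phi^{-1}(t_i)(x+z)^\intercal v_i}$, which, by the previous theorem, is an unbiased Quasi-Monte Carlo estimate of $K_G(x,z)$; its error admits a Koksma--Hlawka type bound $B_{\mbox{\tiny PFF}}$, namely the product of the Hardy--Krause variation of the integrand $g_{x,z}(v,t)=e^{-\Vert x\Vert^2-\Vert z\Vert^2}e^{\Phi^{-1}(t)(x+z)^\intercal v}$ and the star discrepancy of the asymptotically uniform sequence $\{(v_i,t_i)\}_{i=1}^m$. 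The WPFF estimator of Equation~\ref{WPFF} is the same construction with each term reweighted by the entries of $D$; treating it as a weighted (importance-weighted / control-variate) Quasi-Monte Carlo estimator yields a bound $B_{\mbox{\tiny WPFF}}(D)$ that is an explicit function of $D$.

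Next I would carry out the reduction: taking $D$ equal to the (suitably normalized) identity makes all weights equal, so $\varphi_{\mbox{\tiny WPFF}}$ collapses exactly to $\varphi_{\mbox{\tiny PFF}}$ and hence $B_{\mbox{\tiny WPFF}}(I) = B_{\mbox{\tiny PFF}}$. Since $D$ is learnable and is chosen to minimize the estimation error of the objective, the operative quantity is $\min_D B_{\mbox{\tiny WPFF}}(D)$, and because $D=I$ is feasible we conclude $\min_D B_{\mbox{\tiny WPFF}}(D) \le B_{\mbox{\tiny WPFF}}(I) = B_{\mbox{\tiny PFF}}$, which is exactly the claimed inequality.

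The main obstacle is the bias bookkeeping: reweighting a Quasi-Monte Carlo sum generically perturbs its mean, so I must either (a) restrict $D$ to the affine family of weights that preserves unbiasedness — inside which $D=I$ still lies, so the containment argument runs verbatim on the discrepancy/variance term — or (b) if $D$ is allowed to trade a small bias for a larger reduction in the discrepancy term, run the argument on the mean-squared-error bound instead, again using that $D=I$ is admissible so the minimum over $D$ cannot exceed the PFF value. A secondary, routine check is that the $D$-reweighted integrand still has finite Hardy--Krause variation so that the Quasi-Monte Carlo bound applies; this follows from the boundedness of $v\mapsto e^{\Phi^{-1}(t)(x+z)^\intercal v}$ over the sphere together with $t\in(0,1)$ and $D$ ranging over a compact set.
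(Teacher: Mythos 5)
Your core argument is exactly the paper's: PFF is the equal-weight special case of WPFF, equal weights are a feasible point of the weight-optimization problem, and hence the optimized weighted bound cannot exceed the equal-weight bound. The paper instantiates this with a different technical scaffold than your Koksma--Hlawka/Hardy--Krause bound: following \cite{yang2014quasi}, it works in a Paley--Wiener RKHS and bounds the integration error by $\Vert f\Vert_{PW_b}\, D^{\square b}_p(S,\Xi)$, where the weighted discrepancy $D^{\square b}_p(S,\Xi)^2$ is derived explicitly as a convex quadratic in the weight vector $\xi$. The conclusion $D^{\square b}_p(S,\xi)\le D^{\square b}_p(S)$ then follows precisely by the feasibility argument you articulate (and which the paper leaves implicit by merely asserting that the minimizer of the convex program is at least as good as the uniform-weight point). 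Your secondary concern about bias bookkeeping is unnecessary in this framework: the discrepancy measure bounds the full integration error $\epsilon_{S,p,\Xi}[f]=|I_{d,p}(f)-I_{S,\Xi}(f)|$ directly, with no separate bias/variance split, so the paper does not restrict $D$ to unbiasedness-preserving weights and need not switch to a mean-squared-error bound. Your ``routine check'' on finite Hardy--Krause variation has its counterpart in the paper's standing assumption $f\in PW_b$, which guarantees the RKHS error bound applies.
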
   

Building upon the Quasi-Monte Carlo foundation, we further introduce the concept of weighted Quasi-Monte Carlo to enhance the efficiency of approximation. This advancement aims to leverage the strengths of the Quasi-Monte Carlo method, augmenting it with strategically weighted sampling to improve the precision and convergence rates of our approximations. The detailed proof is provided in the supplementary materials.

To further accelerate the training speed, we propose the use of frequency domain transformations to reduce the required computational resources. Fast Fourier Transform (FFT) and Discrete Cosine Transform (DCT) are commonly used methods for such transformations. Compared to ordinary orthogonal transformations, frequency domain transformations have algorithms for rapid computation, significantly reducing the computational cost of our proposed mapping. Specifically, the complexity of $\mathcal{O}(m)$ can be reduced to $\mathcal{O}(\log(m))$. Additionally, since DCT operates in the real number domain, it demands even less computational resources and is more hardware-friendly. Therefore, we opt for the DCT to carry out our kernel mapping.

\begin{algorithm}[tb]
	\caption{Frequency domain kernelization for efficient language models.}
	\label{alg}
	\begin{algorithmic}
		\INPUT A small amount of data $x_i$, a pre-trained Transformer model $M$.
		\STATE {\bfseries 1. Initialization:} the DCT coefficient $\mathcal{C}$, the weight $D$, the diagonal matrix $T$ in Equation~\ref{DCF} for each layer in $M$.
		\STATE {\bfseries 2. Transformation:} transform the vanilla attention calculation $\mbox{Attention}(Q, K, V) = \mbox{softmax}(QK^\intercal)V$ to $\mbox{FKA}(Q, K, V) = \phi_{\mbox{\tiny WDCF}}(Q)\phi_{\mbox{\tiny WDCF}}(K)^\intercal V$ using the Weighted Discrete Cosine Features for each layer in $M$.
		\STATE {\bfseries 3.} Get the transformed model $M_{\mbox{\tiny FKA}}$.
		\REPEAT
		 \STATE {\bfseries 4.} Randomly select a batch of data from $x_i$.
		\STATE {\bfseries 5.} Employ the transformed model $M_{\mbox{\tiny FKA}}$ on the mini-batch.
		\STATE {\bfseries 6.} Update weights in $M_{\mbox{\tiny FKA}}$ according to the loss and gradient;
		\UNTIL{convergence.}
	    \OUTPUT An efficient language model $M_{\mbox{\tiny FKA}}$.
	\end{algorithmic}
\end{algorithm}

Specifically, a DCT coefficient $\mathcal{C}\in\mathbb{R}^{d\times d}$ in the frequency domain is defined as:
\begin{equation}
	\scriptsize
	\mathcal{C}_{j_1 j_2} = s_{j_1}s_{j_2} \sum_{i_1=0}^{n-1} \sum_{i_2=0}^{d-1} \cos \left(\frac{\pi (2i_1+1)j_1}{2d}\right) \cos \left(\frac{\pi (2i_2+1)j_2}{2d}\right),
	\label{DCT}
\end{equation}
where $s_j = \sqrt{1/d}$ if $j=0$ and $s_j = \sqrt{2/d}$ otherwise. The weighted mapping using DCT (which is called Weighted Discrete Cosine Features) can be reformulated as:
\begin{equation}
	\phi_{\mbox{\tiny WDCF}}(x) = D e^{T \mathcal{C} x^\intercal},
	\label{DCF}
\end{equation}
where $\mathcal{C}\in\mathbb{R}^{m\times d}$ is the DCT coefficient, $D\in\mathbb{R}^{m}$ is a learnable weight, and $T= \mbox{diag}(t_1,\dots,t_m)$ is a random diagonal matrix following the inverse cumulative distribution. Note that since the $x$ in attention mechanism is usually normalized, we ignore the term of $\Vert x\Vert^2$ in Equation~\ref{PFF} for efficiency. Therefore, using DCT as a kernel can closely approximate the original attention mechanism while have low computation complexity. For scenarios where $m>d$, more DCT transformations can be derived using different boundary conditions. Details can be referred to~\cite{ahmed1974discrete}. It is noted that we set $m=d$ to avoid increasing computational complexity in the subsequent experiments.

\begin{table*}
	\centering
	\caption{The experimental results of the proposed method. Training time is measured using A800. Inference throughput is evaluated with token length of 2048. * denotes results from~\cite{he2024densemamba}. }
	\begin{tabular}{ccccccc|ccc}
		\toprule
		\multirow{2}{*}{Model}  & \multirow{2}{*}{PIQA }& \multirow{2}{*}{WinoGrande}   & \multirow{2}{*}{WSC} & \multirow{2}{*}{ARC-E} & \multirow{2}{*}{ARC-C} & \multirow{2}{*}{LogiQA} & \multirow{2}{*}{Avg} & Training & Inference  \\
		&  &    &  &  &  &  &  & (day) & (tokens/s)\\
		\midrule
		Pythia-70M   &
		0.498&
		0.484&
		0.596&
		0.25&
		0.221&
		0.202&
		0.375&
		21.3 & 2037  \\
		DiJiang-70M  & 0.587&
		0.511&
		0.365&
		0.403&
		0.213&
		0.253&
		0.389&
		1.3 &2605 \\\midrule
		Pythia-160M&
		0.532&
		0.484&
		0.634&
		0.265&
		0.227&
		0.202&
		0.391&42.9 &622\\
		DiJiang-160M&
		0.618&
		0.490&
		0.384&
		0.439&
		0.217&
		0.239&
		0.398&2.7&1315\\\midrule
		Pythia-410M&
		0.668&
		0.537&
		0.567&
		0.521&
		0.213&
		0.22&
		0.454&105.8& 203\\
		DiJiang-410M&
		0.663&
		0.524&
		0.567&
		0.492&
		0.244&
		0.247&
		0.456&6.6&787\\\midrule
		Pythia-1B&
		0.706&
		0.533&
		0.365&
		0.569&
		0.269&
		0.296&
		0.456&201.2&105\\
		Mamba-1.3B*&
	    0.663&
		0.530&
		0.365&
		0.508&
		0.251&
		0.263&
		0.430
		&-&-\\
		DiJiang-1B&
		0.677&
		0.521&
		0.365&
		0.537&
		0.253&
		0.284&
		0.440&12.6&611\\\midrule
		Pythia-2.8B&
		0.737&
		0.596&
		0.384&
		0.640&
		0.295&
		0.215&
		0.478&593.3&34\\
		DiJiang-2.8B&
		0.713&
		0.545&
		0.413&
		0.597&
		0.289&
		0.279&
		0.473&37.1&284\\\midrule
		OPT-350M&
		0.645&
		0.524&
		0.365&
		0.441&
		0.208&
		0.210&
		0.399
		&-&201\\
		DiJiang-350M&
		0.550&
		0.507&
		0.635&
		0.286&
		0.227&
		0.223&
		0.404&5.6&820\\\midrule
		TinyLLaMA-1.1B&
		0.666&
		0.541&
		0.413&
		0.487&
		0.211&
		0.228&
		0.424
		&-&74\\
		DiJiang-1.1B&
		0.535&
		0.508&
		0.635&
		0.286&
		0.243&
		0.212&
		0.403&13.9&613\\
		\bottomrule
	\end{tabular}
	\label{tab:whole}
	\vspace{-0.5em}
\end{table*}

Therefore, the kernelized attention in frequency domain (FKA) is then reformulated as:
\begin{equation}
	\begin{aligned}
		&\mbox{FKA}(Q, K, V) = \phi_{\mbox{\tiny WDCF}}(Q)\phi_{\mbox{\tiny WDCF}}(K)^\intercal V, \\
		&\mbox{where } Q,K,V\in\mathbb{R}^{n\times d},
	\end{aligned}
	\label{fka_attn}
\end{equation}
This approach achieves a notable reduction in computational complexity by employing the Discrete Cosine Transform (DCT) to map the queries and keys within the Transformer's attention mechanism to a domain where operations are inherently more efficient.

In summary, our method leverages frequency domain kernelization for Transformer attention mechanisms, significantly cutting computational costs while either preserving or enhancing model performance. The details are shown in  Algorithm~\ref{alg}. Through the strategic use of the weighted Quasi-Monte Carlo method, which outperforms traditional Monte Carlo sampling in efficiency and accuracy, combined with DCT for efficient frequency domain transformations, we attain linear complexity in attention computation. This reformulation not only improves the scalability of Transformers, enabling them to handle larger datasets and extended sequences with ease, but also markedly accelerates the training and inference phases.

\section{Experiments}

In this section, we conduct extensive experimental validation of the proposed architecture, encompassing results across language models of varying scales. Additionally, we provide detailed analyses to substantiate the effectiveness of our approach.

\subsection{Evaluation on Different Scales}
Given the challenge of replicating the training processes of most language models, as only their checkpoints are openly available, we opted to validate our method using Pythia~\cite{biderman2023pythia}, a model with a fully public dataset and training procedure, enabling fair comparisons. 

We adhered to the exact training settings employed by Pythia, including learning rates, optimizers, and other hyperparameters, and utilized the Pile dataset. The Pile~\cite{gao2020pile} is an 825 GiB corpus of English text, specifically designed for training large-scale language models. This project is composed of 22 distinct, high-quality subsets, both pre-existing and newly constructed, many of which originate from academic or professional sources. This comprehensive and diverse dataset serves as a robust foundation for developing and fine-tuning language models Our DiJiang model was fine-tuned from the pre-trained Pythia model. We evaluated our approach on six public datasets used by Pythia: PIQA~\cite{bisk2020piqa}, WinoGrande, WSC~\cite{sakaguchi2021winogrande}, ARC-E, ARC-C~\cite{clark2018think}, and LogiQA~\cite{liu2020logiqa}. The Pythia model's checkpoint was obtained from HuggingFace\footnote{\url{https://huggingface.co/EleutherAI}}. We adapt the learned gating mechanism~\cite{peng2021random} similar with the RetNet~\cite{sun2023retentive} to augment our DiJiang.

The experimental results, as shown in Table~\ref{tab:whole}, indicate that our method achieved remarkable outcomes across different model sizes, ranging from 70M to 2.8B parameters. On average, the performance on the six datasets was nearly identical to that of the original Pythia, but with only $\sim1/16$ of the training cost. Furthermore, the inference speed of our DiJiang model was significantly faster than that of the original Pythia. These results substantiate the effectiveness of our approach, demonstrating its potential to enhance the efficiency of large language models without compromising performance.

\subsection{Evaluation on Different Models}

To evaluate the effectiveness of our method across different models, as shown in Table~\ref{tab:whole}, we further applied our approach to the OPT-350M~\cite{zhang2022opt}\footnote{\url{https://huggingface.co/facebook/opt-350m}} and TinyLLaMA-1.1B\footnote{\url{https://huggingface.co/TinyLlama/TinyLlama-1.1B-python-v0.1}} models. It's important to note that since their training data are not fully accessible, we continued to use the Pile dataset for fine-tuning them. 

Finally, we conducted further experiments on the well-known publicly available large language model, LLaMA2-7B, fine-tuning it into the DiJiang-7B model. Table~\ref{tab:llama27b} reveal that the DiJiang-7B model achieves results that are virtually identical to the original LLaMA2-7B across various benchmarks. Remarkably, our model required only 40B training data, significantly less than the 2T tokens used by LLaMA2-7B. This demonstrates the successful application of our method to large-scale models at the 7B parameter level, highlighting the efficiency and effectiveness of our fine-tuning approach even when scaling to vast model sizes.

Interestingly, we found that despite using a limited dataset, our method achieved results similar to the original models with a significantly lower training cost and faster speed. This outcome further demonstrates the strong generalizability and flexibility of our approach, underscoring its potential applicability across a broad spectrum of language models, even in scenarios where the original training datasets are not available.

\begin{table*}
	\centering
	\caption{Comparison of different linear attention models on fine-tuning Pythoia-410M~\cite{biderman2023pythia}. }
	\begin{tabular}{ccccccc|c}
		\toprule
		Model  & PIQA & WinoGrande   & WSC & ARC-E & ARC-C & LogiQA & Avg   \\
		\midrule
		Pythia-410M~\cite{biderman2023pythia} &
		0.668&
		0.537&
		0.567&
		0.521&
		0.213&
		0.22&
		0.454\\
		\midrule
		Linformer~\cite{wang2020linformer}&
		0.5267&
		0.5114&
		0.6346&
		0.2656&
		0.244&
		0.2074&
		0.3982
		\\
		Cosformer~\cite{qin2022cosformer}&
		0.5218&
		0.5059&
		0.6058&
		0.2673&
		0.2637&
		0.2642&
		0.4047
		\\
		Performer~\cite{choromanski2020rethinking}&
		0.6431&
		0.4964&
		0.4327&
		0.4701&
		0.2312&
		0.2366&
		0.4183
		\\
		RetNet~\cite{sun2023retentive}&
		0.4951&
		0.4957&
		0.6346&
		0.2508&
		0.227&
		0.2028&
		0.3843
		\\\midrule
		PFF (Equation~\ref{PFF})&
		0.6453&
		0.4996&
		0.4712&
		0.4747&
		0.2295&
		0.2381&
		0.4264
		\\
		
		DiJiang (Ours)&
		0.6638&
		0.5241&
		0.5673&
		0.4928&
		0.2449&
		0.2473&
		\textbf{0.4567}
		\\
		\bottomrule
	\end{tabular}
	\label{tab:comp}
	\vspace{-0.5em}
\end{table*}

\begin{table*}
	\scriptsize
	\centering
	\caption{Comparison with LLaMA2-7B on various benchmarks. }
	\begin{tabular}{cccccccccccc|cc}
		\toprule
		Model  & PIQA & SIQA & BoolQ & WSC & HellaSwag & ARC-E & ARC-C & MMLU & NQ & COPA & Race-Middle & Avg & Tokens  \\
		\midrule
		LLaMA2-7B &
		0.782&
		0.485&
		0.749&
		0.663&
		0.740&
		0.561&
		0.403&
		0.468&
		0.192&
		0.670&
		0.402&
		0.565&
		2000B\\
		DiJiang-7B&
		0.775&
		0.346&
		0.626&
		0.683&
		0.694&
		0.626&
		0.427&
		0.407&
		0.194&
		0.730&
		0.618&
		0.557&
		40B\\
		\bottomrule
	\end{tabular}
	\label{tab:llama27b}
	\vspace{-0.5em}
\end{table*}

\subsection{Comparison with Linear Transformers}

To compare the superiority of our approach against other linear-complexity self-attention Transformer models, we validated the fine-tuning results on Pythia-400M for different models including Linformer, Performer, RetNet, and Cosformer. For a fair comparison, we employed the same training settings and data. Table~\ref{tab:comp} displays the comparative results, revealing that while existing methods can achieve good results through retraining, as evidenced by their original publications, most of them suffer from significant accuracy losses in scenarios where fine-tuning is done without retraining. This is largely because these methods struggle to accurately approximate the original attention mechanism, leading to an inability to restore the original accuracy with minimal training.

Among these comparison methods, Performer achieved the best results by approximating the original attention with Positive Random Features (PRF). However, as previously discussed, this Monte Carlo-based approximation method cannot achieve satisfactory outcomes, resulting in accuracy loss. By switching from Monte Carlo to the Quasi-Monte Carlo scheme using Positive Fixed Features (PFF) as described in Equation~\ref{PFF}, we surpassed the accuracy of Performer but still fell short of the original vanilla Transformer's performance. Furthermore, by incorporating the Discrete Cosine Transform (DCT), our method achieves higher efficiency than approaches using PFF kernels. The DCT transformation enables a more compact and efficient representation of the frequency components of the attention mechanism. This efficiency stems from the DCT's ability to concentrate energy, allowing for a sparse representation that captures the most significant features of the data with fewer coefficients. Consequently, our approach not only closely approximates the original attention but also does so with improved computational performance compared to PFF-based methods. This advantage highlights the effectiveness of using DCT in optimizing the approximation of attention mechanisms, further underscoring the potential of our method in enhancing the efficiency of Transformer models. Further incorporating weighted Quasi-Monte Carlo, our DiJiang architecture ultimately achieved accuracy nearly identical to the original Pythia-400M, validating the efficacy of our approximation method. This demonstrates not only the potential of our approach for fine-tuning large-scale language models but also underscores the importance of choosing an efficient approximation strategy to maintain model performance.

\begin{figure}[ht]
	\vspace{-0.5em}
	\centering
	\includegraphics[width=1.0\linewidth]{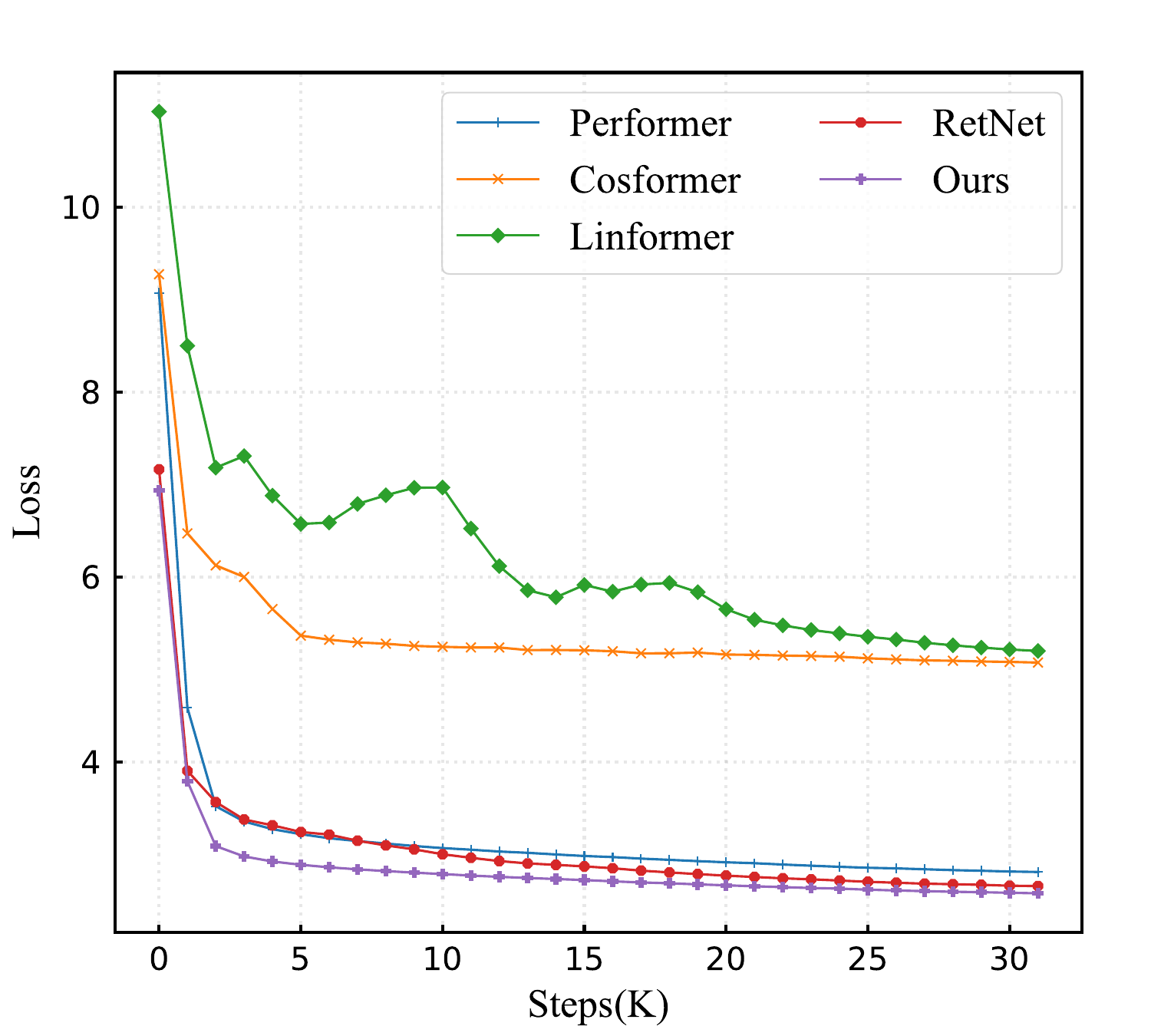}
	\vspace{-2.0em}
	\caption{Training Curve of different methods. The proposed method achieves the lowest PPL and the fastest converge speed.}
	\vspace{-0.5em}
	\label{fig:curve}
\end{figure}

\begin{figure*}[t]
	\centering
	\vspace{-0.5em}
	\begin{tabular}{ccc}
		\includegraphics[width=0.31\linewidth,trim=2cm 0cm 2cm 0cm]{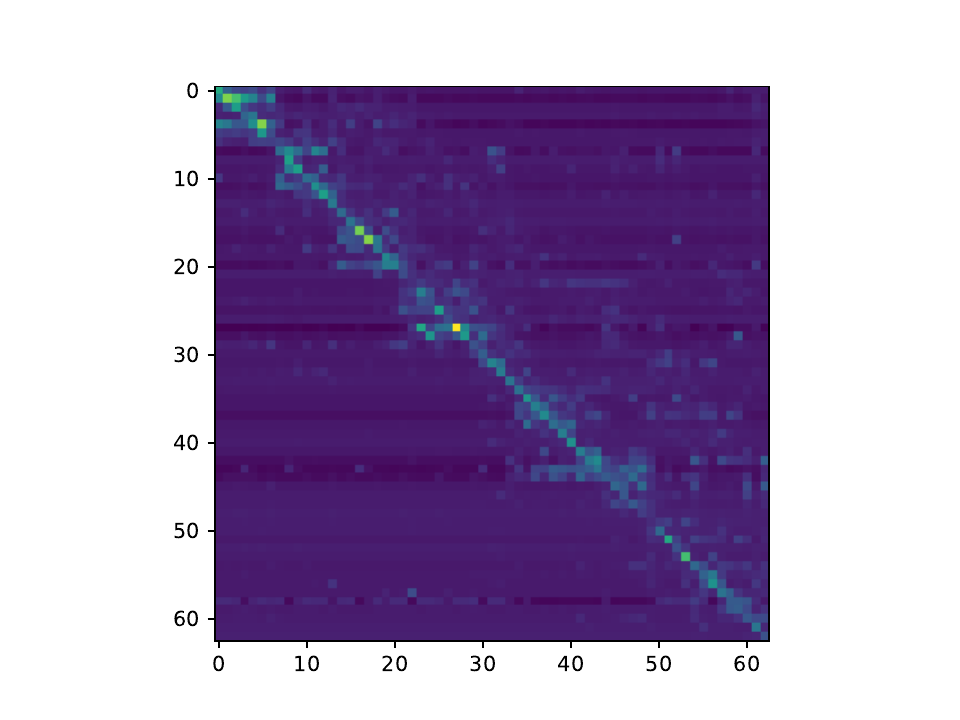}&\includegraphics[width=0.31\linewidth,trim=2cm 0cm 2cm 0cm]{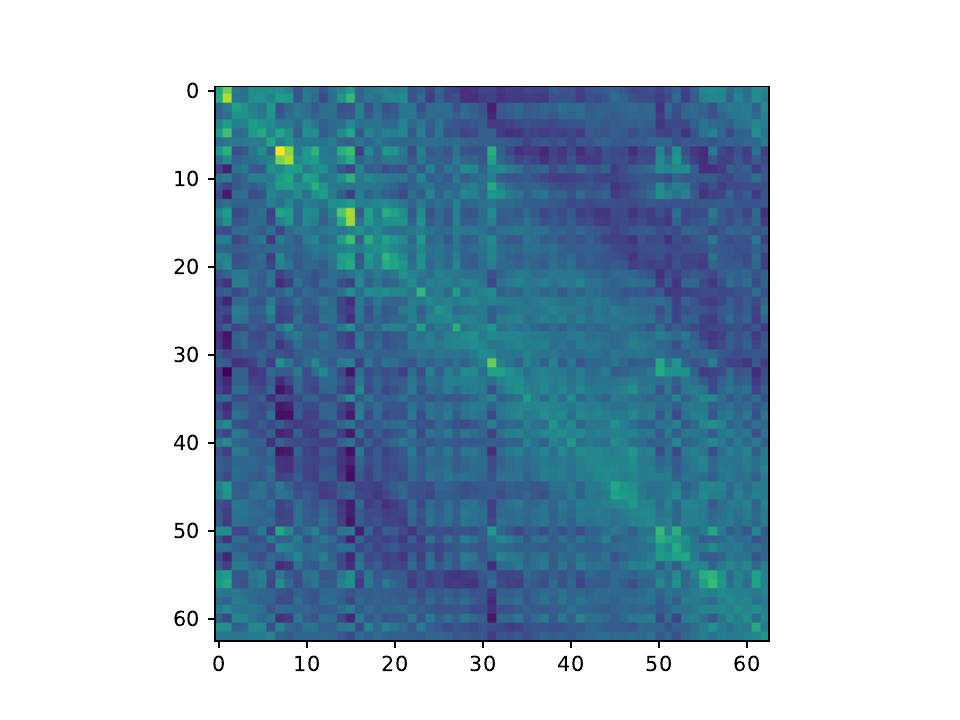}&\includegraphics[width=0.31\linewidth,trim=2cm 0cm 2cm 0cm]{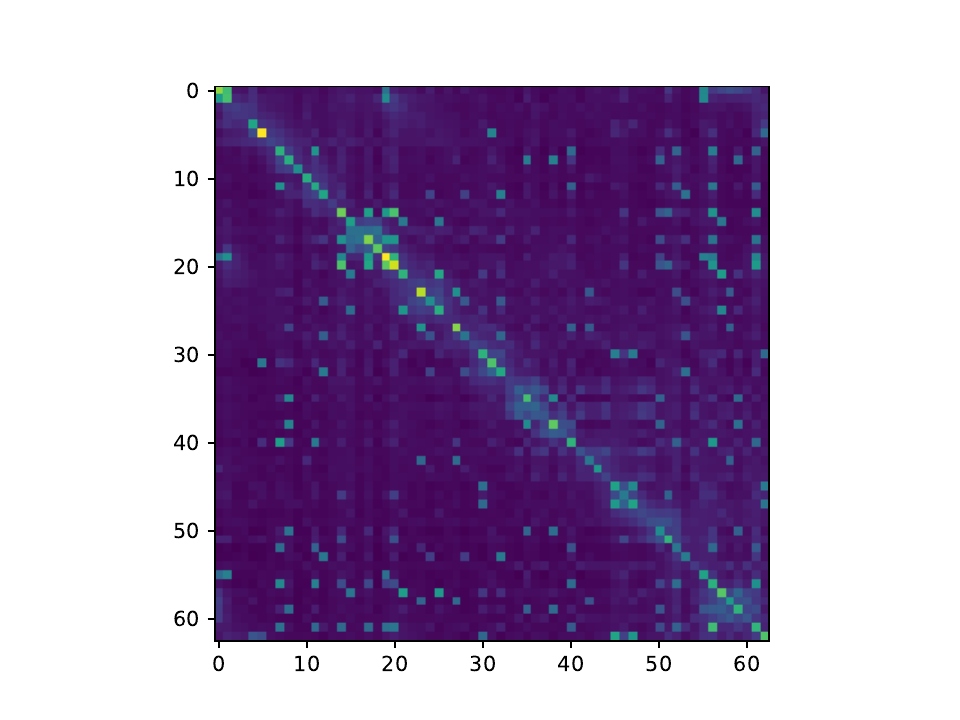}\\
		\vspace{-0.5em}
		(a) Vanilla attention & (b) PRF kernel  &(c) WDCF kernel\\
	\end{tabular}
\vspace{-0.5em}
	\caption{Visualization of attention map of different architectures. The results are averaged by multiple heads.}
	\vspace{-0.5em}
	\label{fig:attn}
\end{figure*}

\begin{figure}[!h]
	\centering
	\vspace{-0.5em}
	\includegraphics[width=0.9\linewidth]{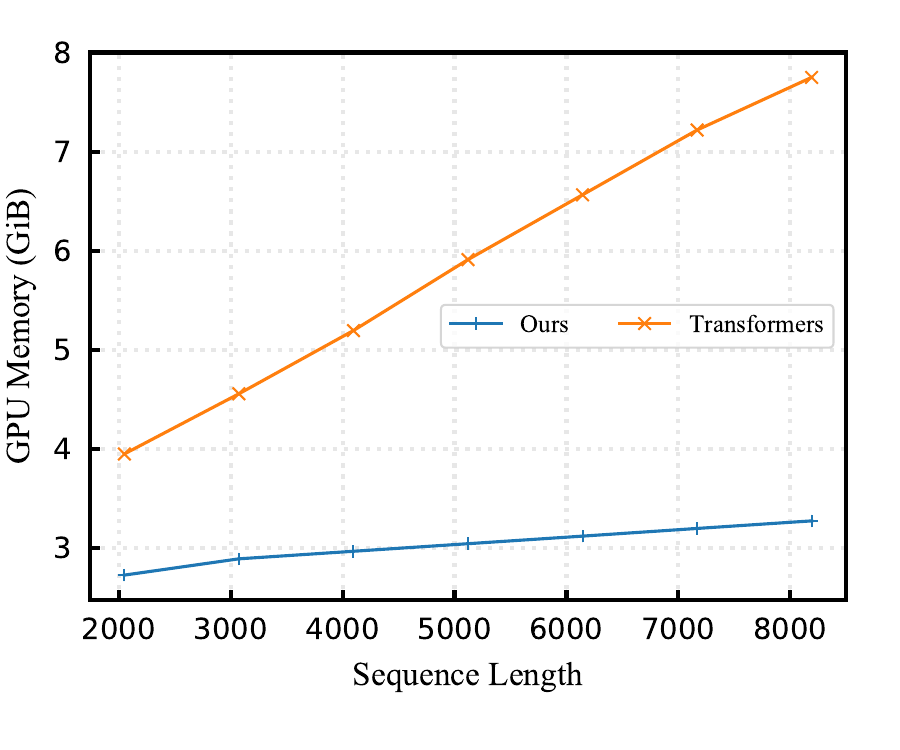}\\
	\vspace{-1.0em}
	(a) Inference Memory\\
	\includegraphics[width=0.9\linewidth]{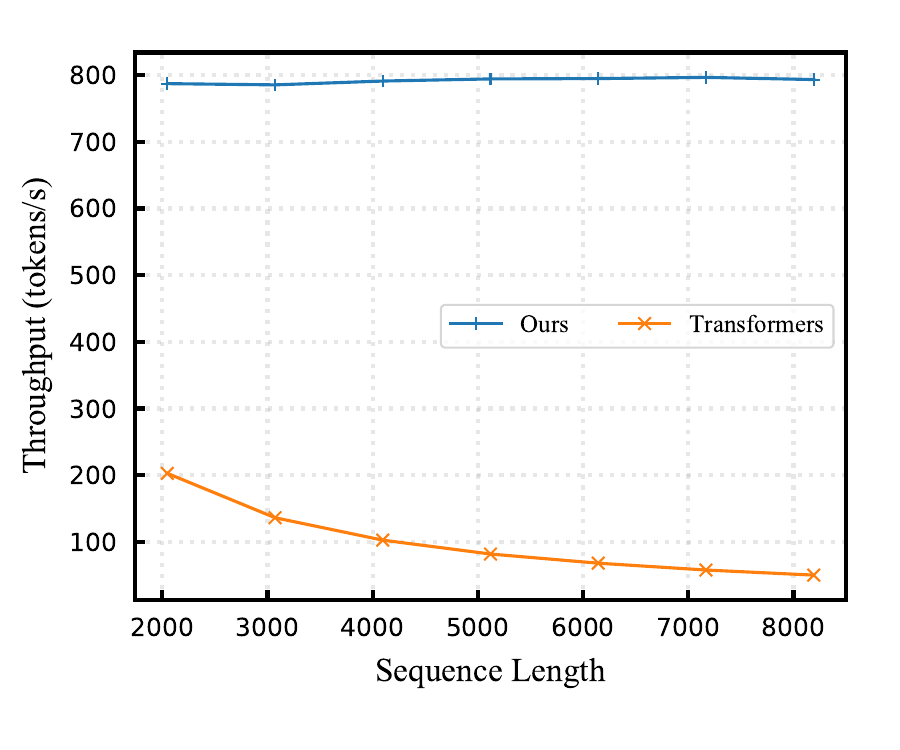}\\
	\vspace{-1.0em}
	(b) Inference Throughput
	\vspace{-0.5em}
	\caption{Comparison of inference memory and throughput between the proposed DiJIang and vanilla Transformer architecture.}
	\vspace{-0.5em}
	\label{fig:speed}
\end{figure}

We further visualized the training curves to showcase the approximation efficiency of different linear Transformer models, as depicted in Figure~\ref{fig:curve}. RetNet, as an emerging language model architecture, has shown its potential by achieving significantly low loss values, underscoring its capability for language tasks. Despite its low loss, RetNet does not necessarily outperform on benchmark metrics and, in some cases, even falls short of the results achieved by the Performer. This discrepancy highlights the importance and advantages of employing kernel methods to approximate the original attention computation, particularly in fine-tuning scenarios. 

Our method demonstrates the fastest rate of loss reduction and ultimately achieves the lowest loss value. This rapid convergence indicates that our approach can quickly reach a performance level similar to that of the original Transformer. The visualization clearly underscores the superiority of our method in terms of both convergence speed and final model accuracy, validating our approach's effectiveness in efficiently approximating the attention mechanism while maintaining high performance standards. This visual evidence further solidifies our claim that our method stands out among linear Transformer alternatives, offering a compelling solution for optimizing Transformer models without compromising on quality.

\subsection{Comparison of Inference Cost}

Furthermore, we also evaluated the memory usage and throughput of our method in comparison to the original Transformer model under various conditions. We selected the Pythia-410M model as our primary subject for analysis. We follow the implementation of RetNet~\cite{sun2023retentive} to efficient inference. The specific results, as depicted in Figure~\ref{fig:speed}, demonstrate that as the token length increases, the memory footprint and inference speed of our model do not escalate. This observation is attributed to the linear complexity characteristic of our approach, indicating that our method is more conducive to long-sequence inference. In contrast, due to the quadratic complexity of attention computations, the original Transformer model experiences a continuous increase in both inference time and required memory as the token length grows. This comparison highlights the efficiency and practicality of our solution, particularly in scenarios involving extensive sequences where computational resources are a critical concern.

\subsection{Visualization}
To further demonstrate the effectiveness of our model's approximation of the attention mechanism, we present attention maps generated by different methods in Figure~\ref{fig:attn}. It is evident that the original Transformer's attention map (Figure~\ref{fig:attn} (a)) is rich in information, laying the foundation for its robust capabilities. In contrast, attention maps produced by other linear attention methods such as Performer (Figure~\ref{fig:attn} (b)) fail to adequately capture the relationships between tokens, resulting in maps that are dissimilar to those of the original Transformer and ultimately leading to decreased model accuracy, despite fine-tuning efforts. In contrast, our method  (Figure~\ref{fig:attn} (c)), by employing the weighted Quasi-Monte Carlo scheme, closely approximates the original attention mechanism. This allows it to effectively model the relationships between different tokens, achieving results nearly identical to those of the original Transformer but with significantly faster inference efficiency. This comparison not only highlights the inadequacies of other linear attention methods in capturing token interdependencies but also showcases the superiority of our approach in accurately approximating attention while enhancing computational efficiency.

\section{Conclusion}
This paper introduces DiJiang, a groundbreaking Frequency Domain Kernelization method designed to address the computational inefficiencies inherent in traditional Transformer models. By leveraging linear attention mechanisms and a novel application of the weighted Quasi-Monte Carlo method for efficient sampling, our approach significantly reduces the necessity for extensive retraining. This is particularly beneficial for large language models, where the cost and time associated with training are substantial barriers to progress. The kernelization process, underpinned by Discrete Cosine Transform (DCT), not only diminishes the computational complexity but also ensures that the adaptation from a vanilla Transformer to a linear attention model incurs minimal training costs. Our extensive experiments validate that DiJiang achieves performance on par with conventional Transformers while reducing training costs by about 10x and enhancing inference speeds. This method represents a significant advancement in the development of efficient and scalable Transformer models, promising wider applicability and facilitating advancements in various tasks within the realm of natural language processing and beyond.

\section*{Broader Impact}

This paper presents work whose goal is to advance the field of Machine Learning. There are many potential societal consequences of our work, none which we feel must be specifically highlighted here.


\bibliography{example_paper}
\bibliographystyle{icml2024}

\newpage
\appendix
\onecolumn
\section{Theoretical Proof.}
\begin{theorem}
	The Positive Fixed Features (PFF) is formulated as:
	\begin{equation}
		\varphi_{\mbox{\tiny PFF}}(x) :=\frac{e^{-\Vert x\Vert^2}}{\sqrt{m}}[e^{\Phi^{-1}(t_1)x^\intercal v_1},...,e^{\Phi^{-1}(t_m)x^\intercal v_m}] ^\intercal,
	\end{equation}
	where $V=[v_1,...,v_m]\in \mathbb{S}^{d\times m}$ is asymptotically uniformly distributed and $t_i\sim U(0,1)$. Then, $\varphi_{\mbox{\tiny PFF}}(x)^\intercal \varphi_{\mbox{\tiny PFF}}(z)$ is an unbiased estimate of Gaussian kernel $K_G(x, y)$.
\end{theorem}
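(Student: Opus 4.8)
The plan is to verify the defining property of an unbiased estimator directly: show that $\mathbb{E}\big[\varphi_{\mbox{\tiny PFF}}(x)^\intercal \varphi_{\mbox{\tiny PFF}}(z)\big] = K_G(x,z)$, where the expectation is over the joint randomness of $t_1,\dots,t_m$ and $v_1,\dots,v_m$. First I would expand the inner product coordinatewise,
\[
	\varphi_{\mbox{\tiny PFF}}(x)^\intercal \varphi_{\mbox{\tiny PFF}}(z) = \frac{e^{-\Vert x\Vert^2-\Vert z\Vert^2}}{m}\sum_{i=1}^{m} e^{\Phi^{-1}(t_i)\,(x+z)^\intercal v_i},
\]
and observe that the $m$ summands are identically distributed. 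Linearity of expectation then collapses the claim to a single-term statement: writing $u = x+z$, it suffices to show $\mathbb{E}_{t,v}\big[e^{\Phi^{-1}(t)\,u^\intercal v}\big] = e^{\Vert u\Vert^2/2}$, since combining this with the scalar prefactors $e^{-\Vert x\Vert^2},e^{-\Vert z\Vert^2}$ reproduces $K_G(x,z)=e^{-\Vert x\Vert^2/2-\Vert z\Vert^2/2+x^\intercal z}$ (the normalization constants being exactly those absorbed when $x,z$ are unit-normalized, as is standard in attention).

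Next I would apply the inverse-CDF (quantile) transform: since $t\sim U(0,1)$, the scalar $\rho := \Phi^{-1}(t)$ is a sample from the radial law whose quantile function is $\Phi^{-1}$, so $\omega := \rho\,v$ is a random vector built from a quantile-sampled radius $\rho$ and an asymptotically uniform direction $v$ on $\mathbb{S}^{d-1}$. The core of the argument is then a change to spherical coordinates for the isotropic Gaussian: $\mathcal{N}(0,I_d)$ factorizes as (radius with the appropriate radial density) $\times$ (direction uniform on the sphere), under which $\mathbb{E}_{\omega\sim\mathcal{N}(0,I_d)}[e^{\omega^\intercal u}] = e^{\Vert u\Vert^2/2}$. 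Matching the radial factor supplied by $\Phi^{-1}(\cdot)$ to this density, and using that $\{v_i\}$ is asymptotically uniformly distributed so that the empirical angular average converges to the exact spherical integral, yields the required identity $\mathbb{E}_{t,v}[e^{\Phi^{-1}(t)\,u^\intercal v}] = e^{\Vert u\Vert^2/2}$.

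Finally I would feed this back through the exponential-kernel identity already recorded in the excerpt, namely $e^{x^\intercal z} = \mathbb{E}_{\omega\sim\mathcal{N}(0,I)}\big[e^{\omega x^\intercal-\Vert x\Vert^2/2}\,e^{\omega z^\intercal-\Vert z\Vert^2/2}\big]$ (itself proved by completing the square in a Gaussian integral), to conclude unbiasedness of $\varphi_{\mbox{\tiny PFF}}(x)^\intercal \varphi_{\mbox{\tiny PFF}}(z)$ for $K_G$.

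The main obstacle is the spherical-coordinate step: verifying that the radius generated by $\Phi^{-1}(\cdot)$ together with a uniform direction genuinely reproduces the moment generating function of the \emph{full} $d$-dimensional isotropic Gaussian, and then controlling the ``asymptotic'' nature of the equidistribution — i.e.\ arguing that the angular average over the (only asymptotically uniform) point set $\{v_i\}$ tends to the exact spherical average, so that the estimator is (asymptotically) unbiased, and exactly unbiased in the limiting uniform case. The remaining ingredients — the coordinatewise expansion, the reduction by linearity of expectation, and the Gaussian integral identity — are routine.
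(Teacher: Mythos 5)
Your plan matches the paper's proof essentially step for step: both decompose the isotropic Gaussian integral into spherical coordinates (radius via the inverse-CDF $\Phi^{-1}$, direction via an asymptotically equidistributed set on $\mathbb{S}^{d-1}$), reduce the claim to the Gaussian moment-generating-function identity $\mathbb{E}_{w\sim\mathcal{N}(0,I_d)}\bigl[e^{(x+z)^\intercal w}\bigr]=e^{\Vert x+z\Vert^2/2}$ via completing the square, and pass to the limit $m\to\infty$ so that the empirical angular average converges to the exact spherical integral. One small point worth tightening: the $v_i$ are a fixed (low-discrepancy) point set rather than i.i.d.\ draws, so the summands are not literally identically distributed and the ``unbiasedness'' is the asymptotic-in-$m$ statement you correctly land on at the end, not an exact finite-$m$ one.
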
 

\begin{proof}
	
The proof is motivated by~\cite{lyu2017spherical}. We also use spherical coordinate changes to get the following proof. The Gaussian kernel is real-valued and therefore the imaginary part in Eq.(\ref{kernel}) can be discarded.
\begin{equation}
		K_G(x,y) = e^{-\frac{\Vert x-y\Vert^2}{2}}
		= \int_{\mathbb{R}^d} e^{i(x-y)^\intercal w}\mu(w)dw
		= \int_{\mathbb{R}^d} \cos((x-y)^\intercal w)\mu(w)dw
		= e^{-\Vert x\Vert^2-\Vert y\Vert^2}\int_{\mathbb{R}^d} e^{((x+y)^\intercal w)}\mu(w)dw,
	\label{normal-distribution}
\end{equation}
where $\mu(\cdot)$ is the probability density function of d-dimensional standard normal distribution.

The Gaussian kernel is a shift and rotation invariant kernel. Given any rotation $R \in SO(d)$, where $SO(d)$ denotes rotation groups, the corresponding probability density is also Gaussian according to Bochner's theorem. For shift and rotation invariant kernels, we can convert the integral to spherical coordinates. $r = ||w||_2$ and $p(r)$ be the density function of $r$, and $w=rv$. Because of the rotation invariant property of $K_G(x,y)$, we achieve:
\begin{equation}
	\label{shift-and-rotation-invariant-kernel}
		K_G(x,y) = \int_{\mathbb{R}^d} e^{i(x-y)^\intercal w}p(w)dw 
		= \int_{R_+} \int_{\mathbb{S}^{d-1}}e^{i(x-y)^\intercal rv}p(r)drd\sigma (v)
		= \int_{[0,1]} \int_{\mathbb{S}^{d-1}}e^{i(x-y)^\intercal \Phi^{-1}(t)v} dtd\sigma (v),
\end{equation}
where $\sigma$ denotes the normalized surface area measure on $\mathbb{S}^d := \{x\in \mathbb{R}^{d}| \Vert x\Vert_2 = 1 \}$ and $\Phi^{-1}(t)$ denotes the inverse cumulative distribution function w.r.t is a non-negative radial scale.

For real valued continuous shift and rotation invariant scaled kernel $K_G(x,y)$, the imaginary parts of the
integral vanish. We can achieve:
\begin{equation}
	K_G(x,y) = \int_{\mathbb{R}^d} e^{i(x-z)^\intercal w}p(w)dw
		= \int_{\mathbb{R}^d} \cos((x-z)^\intercal w)p(w)dw
		= \int_{[0,1]} \int_{\mathbb{S}^{d-1}} \cos((x-z)^\intercal \Phi^{-1}(t)v) dtd\sigma (v).
	\label{shift_and_rotation_realkernel}
\end{equation}
For Gaussian kernel, we can get another medium integral form:
\begin{equation}
	\small
		K_G(x,y) = \int_{\mathbb{R}^d} e^{i(x-y)^\intercal w}\mu(w)dw
		= e^{-\Vert x\Vert^2-\Vert y\Vert^2}\int_{\mathbb{R}^d} e^{((x+y)^\intercal w)}\mu(w)dw
		= e^{-\Vert x\Vert^2-\Vert y\Vert^2}\int_{[0,1]} \int_{\mathbb{S}^{d-1}} e^{(x+y)^\intercal \Phi^{-1}(t)v}dtd\sigma (v).
	\label{shift_and_rotation_realkernel_Gauss}
\end{equation}
According to~\cite{brauchart2015distributing}, if the point set $V=[v_1,...,v_m]\in \mathbb{S}^{d\times m}$ is asymptotically uniformly distributed, the following equation holds true:
\begin{equation}
	\lim_{m \to \infty} \frac{1}{m} \sum_{i=1}^{m}f(v_i) = \int_{\mathbb{S}^d}f(v)d\sigma(v). 
	\label{asymptotically}
\end{equation}
Then, we have:
	\begin{equation}
		\begin{aligned}
			\lim_{m\to \infty}  E_{t_i\sim U(0,1)}[\varphi_{\mbox{\tiny PFF}}(x)^\intercal \varphi_{\mbox{\tiny PFF}}(y)] 
			=& E_{t_i\sim U(0,1)}[\lim_{m \to \infty} \frac{e^{-\Vert x\Vert^2-\Vert y\Vert^2}}{m} \sum_{i=1}^m  e^{\Phi^{-1}(t_j)(x+y)^\intercal v_i}] \\
			=&e^{-\Vert x\Vert^2-\Vert y\Vert^2}\int_{[0,1]} \int_{\mathbb{S}^{d-1}} e^{(x+y)^\intercal \Phi^{-1}(t)v}dtd\sigma (v)\\
			=&e^{-\Vert x\Vert^2-\Vert y\Vert^2}(2\pi)^{-\frac{d}{2}}\int_{\mathbb{R}^{d}} e^{(x+y)^\intercal w}e^{-\frac{\Vert w\Vert^2}{2}}dw\\
			=&e^{-\Vert x\Vert^2-\Vert y\Vert^2}(2\pi)^{-\frac{d}{2}}\int_{\mathbb{R}^{d}} e^{\frac{\Vert x+y\Vert^2}{2}}e^{-\frac{\Vert w-(x+y)\Vert^2}{2}}dw\\
			=&e^{-\Vert x\Vert^2-\Vert y\Vert^2}e^{\frac{\Vert x+y\Vert^2}{2}}\\
			=&e^{-\frac{\Vert x-y\Vert^2}{2}}\\
			=&K_G(x,y).\\
		\end{aligned}
		\label{SSF1}
	\end{equation}
Therefore, $\varphi_{\mbox{\tiny PFF}}(x)^\intercal \varphi_{\mbox{\tiny PFF}}(z)$ is an unbiased estimate of Gaussian kernel $K_G(x, y)$.
\end{proof}

\begin{theorem}
	The Weighted Positive Fixed Features (WPFF) is formulated as:
	\begin{equation}
		\varphi_{\mbox{\tiny WPFF}}(x) :=\frac{De^{-\Vert x\Vert^2}}{\sqrt{m}}[e^{\Phi^{-1}(t_1)x^\intercal v_1},...,e^{\Phi^{-1}(t_m)x^\intercal v_m}] ^\intercal,
	\end{equation}
	where $D$ is a learnable parameter which can be optimized by the input $x$. Then the upper bound of the integral estimation error of the objective function by WPFF (Weighted Positive Fixed Features) method is not greater than the upper bound of the integral estimation error of the objective function by PFF (Positive Fixed Features) method. 
\end{theorem}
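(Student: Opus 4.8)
The plan is to exploit the fact that WPFF is a strict generalization of PFF. Setting the learnable weight to the all-ones vector, $D = \mathbf{1}$, makes $\varphi_{\mathrm{WPFF}}$ coincide identically with $\varphi_{\mathrm{PFF}}$; hence the PFF configuration is always a feasible point of the WPFF parameter space. So whatever functional $\mathcal{E}(D)$ we adopt to measure the quality of the integral estimate — the deterministic Koksma--Hlawka-type bound for the weighted sum $\tfrac{1}{m}\sum_i D_i\, e^{\Phi^{-1}(t_i)(x+y)^\intercal v_i}$ approximating the integral in Equation~\ref{shift_and_rotation_realkernel_Gauss}, or the worst-case quadrature error over the relevant RKHS — the ``upper bound of the estimation error by WPFF'' is by construction the optimized value $\mathcal{E}^\star_{\mathrm{WPFF}} = \inf_D \mathcal{E}(D)$, while the PFF bound is the fixed value $\mathcal{E}_{\mathrm{PFF}} = \mathcal{E}(\mathbf{1})$. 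Monotonicity of the infimum then gives $\mathcal{E}^\star_{\mathrm{WPFF}} \le \mathcal{E}_{\mathrm{PFF}}$, which is exactly the claim.

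Concretely, first I would make the error functional explicit: write the target as $I = e^{-\|x\|^2-\|y\|^2}\int_{[0,1]}\int_{\mathbb{S}^{d-1}} g_{x,y}(t,v)\,dt\,d\sigma(v)$ with $g_{x,y}(t,v) = e^{(x+y)^\intercal \Phi^{-1}(t)v}$, identify the WPFF estimate with the weighted cubature rule $Q_D g_{x,y} = \tfrac{1}{m}\sum_i D_i g_{x,y}(t_i,v_i)$ and PFF with $Q_{\mathbf 1}$, and pick a concrete bound $\mathcal{E}(D)$ (e.g.\ $\sup_{\|g\|_{\mathcal H}\le 1}|Q_D g - Ig|$) together with a check that it specializes to the usual equal-weight QMC bound at $D = \mathbf 1$. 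Second I would record that, since the weights in WPFF are trained precisely to reduce this error, the reported WPFF bound is $\inf_D \mathcal{E}(D)$, and conclude by the feasibility argument above. If a \emph{strict} improvement is wanted, I would additionally solve the weight-optimization problem — in the RKHS formulation a linear normal-equations problem — and show its optimum differs from $\mathbf 1$ except in degenerate cases.

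The main obstacle is not the inequality itself, which is a ``larger feasible set'' argument, but pinning down an honest statement of which error bound is meant: the theorem refers to ``the objective function'' without fixing a function class, and since $D$ is described as optimized \emph{per input} $x$, one must be careful that the comparison is made bound-versus-bound at a fixed $x$ (or uniformly in $x$) rather than conflating pointwise with worst-case quantities. Making that quantifier structure explicit, and verifying that the chosen weighted bound genuinely reduces to the PFF bound at $D = \mathbf 1$ for whichever norm is used, is where the real care lies.
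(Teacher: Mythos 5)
Your argument is correct and is essentially the paper's own: the proof in the appendix likewise rests on the observation that the equal-weight (PFF) configuration is a feasible point of the weighted optimization, so the optimized discrepancy cannot exceed the unweighted one. Where you leave the error functional abstract, the paper commits to a concrete choice, following Yang et al.\ (2014): the worst-case quadrature error over the Paley--Wiener RKHS $PW_b$ with sinc reproducing kernel, bounded by $\epsilon_{S,p}[f]\le\|f\|_{PW_b}\,D^{\square b}_p(S)$, and it derives the weighted discrepancy $D^{\square b}_p(S,\Xi)^2 = \pi^{-d}\int_{\square b}|\Psi|^2 - 2v^\intercal\xi + \xi^\intercal H\xi$ before invoking exactly your ``infimum over a larger set'' step. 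One detail worth noting that your sketch passes over: because the weight $D$ multiplies the feature map and the kernel estimate is $\varphi(x)^\intercal\varphi(z)$, the effective cubature weights must factor as $\xi_j = \zeta_j(x)\zeta_j(z)$ with $\zeta\ge 0$ (the paper writes this constraint explicitly and optimizes over $\zeta(x),\zeta(z)$ rather than over arbitrary $\xi$); this restricts the feasible set but still contains $\xi\equiv 1$, so the conclusion is unaffected. Your closing caveat about the looseness of ``the objective function'' in the theorem statement is well taken --- the paper does fix this by choosing $PW_b$, though the choice is imported wholesale from the cited reference rather than argued for.
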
  

\begin{proof}
	
The proof is motivated by~\cite{yang2014quasi}. We use some of the same mathematical definitions and similar proofs from this paper to show that the WPFF method has a smaller upper bound on the overall estimation error of the objective function. Theorem A.2 Lemma A.3 and Lemma A.4 are all relevant to this paper.

Consider the task of computing an approximation of the following integral $I_{d,p}[f] = \int_{\mathbb{R}^d}f(x)p(x)dx$, because of $I_{d,p}[f]=E_{x \sim p(\mathbb{R}^d)}[f(x)]$, an empirical approximation called Monte Carlo (MC) to the integral can be computed by drawing a random point set $S=\{w_1,...,w_s\}$ independently from $p(\mathbb{R}^d)$. When $S$ is a set of fixed points, the empirical approximation is a quasi-Monte Carlo (QMC) method. The purpose of the QMC method is to improve convergence speed by constructing $S$ using deterministic low-differential sequences instead of random sampling points. We have $I_S[f] = \frac{1}{s}\sum_{w \in S}f(w)$.

We define the integration error with respect to the point set S as $\epsilon_S[f] = |I_{d,p}(f)-I_S(f)|$. The integration error for PFF is as follows:
\begin{equation}
		\epsilon_{S,p}[f] = |\int_{\mathbb{R}^d}f(x)p(x)dx - \frac{1}{s}\sum_{j=1}^s f(w_j)|,
\end{equation}
where $S$ is a set of fixed points.

The classical Monte Carlo and quasi-Monte Carlo approximations of integrals have consistent weights. However, it makes sense to weight the approximations, approximate $I_{d,p}[f] = \int_{\mathbb{R}^d}f(x)p(x)dx$ using $I_{S,\Xi}[f] = \sum_{j= 1}^{s}\xi_jf(w_j)$, where $\Xi = \{\xi_1, \xi_2, ..., \xi_s \}$, $\xi_i \geq 0$ for $i \in \{1,2,...,s\}$, we do not need to normalize the weights, that is it is possible that $\sum_{i=1}^s \xi_i \neq 1$.
\begin{equation}
	\begin{aligned}
		\int_{\mathbb{R}^d}e^{i(x-z)^Tw}p(w)dw \approx& \sum_{j = 1}^{s}\zeta_j(x)\zeta_j(z)f(w_j)\\
		=& \sum_{j = 1}^{s}\zeta_j(x)\zeta_j(z)e^{ix^Tw_j}e^{-iz^Tw_j}\\
		=& \varphi_{\mbox{\tiny WPFF},S}(x)^T\varphi_{\mbox{\tiny WPFF},S}(z),
	\end{aligned}
	\label{weighted_QMC_kernel}
\end{equation}
where $\zeta_j(x)\zeta_j(z)=\xi_j$, $\zeta_j(x),\zeta_j(z) \geq 0$, $e^{ix^Tw_j}e^{-iz^Tw_j} = f(w_j)$ for $j \in \{1,2,...,s\}$, and $\Psi_{S}(x) = [\zeta_1(x)e^{ix^Tw_1},...,\zeta_s(x)e^{ix^Tw_s}]^T$.

The integration error for WPFF is as follows:
\begin{equation}
	\begin{aligned}
		\epsilon_{S,p,\Xi}[f] = |\int_{\mathbb{R}^d}f(x)p(x)dx - \sum_{j=1}^s \xi_jf(w_j)|.
	\end{aligned}
\end{equation}
For a vector $b \in R_d$, let us define $\square b = \{u\in \mathbb{R}^d | |u_j| \leq |b_j|\}$. Let
\begin{equation}
	\begin{aligned}
		F_{\square b} =\{f_u(x) = e^{ix^Tu} | u\in \square b\},
	\end{aligned}
	\label{Fb}
\end{equation}
and consider the space of functions that admit an integral representation over $F_{\square b}$ of the form
\begin{equation}
	\begin{aligned}
		f(x) = \int_{u \in \square b} \hat{f}(u)e^{ix^Tu}du,
	\end{aligned}
	\label{Fb_sq}
\end{equation}
where $\hat{f}(u) \in \ell_2(\square b)$. This space is associated with the functions with compactly-supported inverse Fourier transforms called bandlimited functions, which play an important role in Shannon-Nyquist sampling theory. Under a natural choice of inner product, these spaces are called Paley-Wiener spaces and they constitute an RKHS. 
\begin{lemma} 
	(The Kernel of Paley-Wiener RKHS) According to \cite{peloso2011classical}, $PW_b$ denotes the space of functions which are represented in the form of Eq.\ref{Fb_sq}, with the inner product $<f,g>_{PW_b}={(2\pi)}^{2d}<\hat{f},\hat{g}>_{L_2(\square b)}$. $PW_b$ is an RKHS with kernel function,
	\begin{equation}
		\sin c_b(u,v)=\pi^{-d} \prod_{i=1}^d \frac{sin(b_j(u_j-v_j))}{u_j-v_j}\\
	\end{equation}
\end{lemma}

\begin{lemma} 
	According to \cite{yang2014quasi}, for $f \in  PW_b$ (Paley-Wiener spaces), we have 
	\begin{equation}
		\epsilon_{S,p}[f] \leq \Vert f\Vert_{PW_b}D^{\square b}_p(S)\\
	\end{equation}
	where 
	\begin{equation}
		\begin{aligned}
			D^{\square b}_p(S)^2 = &\pi^{-d} \int_{\beta \in \square b} |\Psi(\beta)|^2 d\beta-\frac{2(2\pi)^{-d}}{s}\sum_{j=1}^s \xi_j \int_{\beta \in \square b} \Psi(\beta)e^{i\beta^Tw_j} d\beta+ \frac{1}{s^2}\sum_{l=1}^s\sum_{j=1}^s sinc_b(w_l,w_j).
		\end{aligned}
	\end{equation}
	
	Suppose that $p(\cdot)$ is a probability density function. Let $\Psi(\cdot)$ be the characteristic function associated with $p(\cdot)$.
\end{lemma}
Following~\cite{yang2014quasi}, we can derive the following discrepancy measure that takes into account the weights:
\begin{equation}
	\scriptsize
	\begin{aligned}
		D^{\square b}_p(S, \Xi)^2 = &\pi^{-d} \int_{\beta \in \square b} |\Psi(\beta)|^2 d\beta
		-2(2\pi)^{-d}\sum_{j=1}^s \xi_j \int_{\beta \in \square b} \Psi(\beta)e^{i\beta^Tw_j} d\beta
		+ \sum_{l=1}^s\sum_{j=1}^s \xi_l\xi_j \sin c_b(w_l,w_j) \\
		=& \pi^{-d} \int_{\beta \in \square b} |\Psi(\beta)|^2 d\beta
		-2(2\pi)^{-d}\sum_{j=1}^s \zeta_j(x)\zeta_j(z) \int_{\beta \in \square b} \Psi(\beta)e^{i\beta^Tw_j} d\beta
		+ \sum_{l=1}^s\sum_{j=1}^s \zeta_l(x)\zeta_l(z)\zeta_j(z)\zeta_j(z) \sin c_b(w_l,w_j). \\
	\end{aligned}
\end{equation}
We note that if we fix the points, then optimizing just the weights is a optimization problem. The discrepancy measure can be written as
\begin{equation}
	\begin{aligned}
		D^{\square b}_p(S, \xi)^2 = &\pi^{-d} \int_{\beta \in \square b} |\Psi(\beta)|^2 d\beta -2v^T\xi +\xi^TH\xi \\
		=& \pi^{-d} \int_{\beta \in \square b} |\Psi(\beta)|^2 d\beta -2v^T(\zeta(x)\otimes\zeta(z)) 
		+ (\zeta(x)\otimes\zeta(z))^TH(\zeta(x)\otimes\zeta(z)), \\
	\end{aligned}
\end{equation}
where $\xi = (\xi_1,...,\xi_s)^T$, $\zeta(x)=(\zeta_1(x),...,\zeta_s(x))^T$, and $\xi=\zeta(x)\otimes\zeta(z)=(\zeta_1(x)\zeta_1(z),...,\zeta_s(x)\zeta_s(z))^T$. $\otimes$ is the Hadamard product. $v \in R^s$, $v_j = (2\pi)^{-d} \int_{\beta \in \square b} \Psi(\beta)e^{i\beta^Tw_j} d\beta$  and $H \in R^{s\times s}$, $H_{lj}=sinc_b(w_l,w_j)$.

Thus, the optimal weights can be found by solving the following convex optimization problem
\begin{equation}
	\begin{aligned}
		 &\arg\min_{\zeta(x),\zeta(z) \in R^s} -2v^T(\zeta(x)\otimes\zeta(z)) 
		+ (\zeta(x)\otimes\zeta(z))^TH(\zeta(x)\otimes\zeta(z))\\
		&\text{s.t.} \zeta(x) \leq 0, \zeta(z) \leq 0.
	\end{aligned}
\end{equation}
So obviously for a fixed sampling point, there is $D^{\square b}_p(S, \xi) \leq D^{\square b}_p(S)$. Then the upper bound of the integral estimation error of the objective function by WPFF (Weighted Positive Fixed Features) method is not greater than the upper bound of the integral estimation error of the objective function by PFF (Positive Fixed Features) method.


\end{proof}

\subsection{Examples generated by DiJiang-7B}

\begin{center}
	\begin{tcolorbox}[colback=gray!10,
		colframe=black,
		width=16cm,
		arc=1mm, auto outer arc,
		boxrule=0.8pt,
		]
		\textbf{Prompt:} There is a time when I had to make a difficult decision.
		\newline\newline
		\textbf{DiJiang-7B:} I was in a situation where I could either stay or leave. It was a very difficult time for me. I was very happy in my job. But I felt that I needed to move on. So I decided to leave my current job and start my own business. This was not an easy decision to take. There were many things that needed my attention. And I didn’t have enough time to do everything.
		\newline\newline
		\textbf{LLaMA2-7B:} I had to choose between my family and the people who were close with me. It was a hard decision, but it turned out that they weren’t really friends after all! I am not sure if you have ever been in this situation before or what your thoughts are on making such an important choice like choosing one over another person when there is no clear answer as far back then goes…but here we go anyway: “The best way forward would be for both parties involved (you included)to come together again soon so everyone can move past whatever happened.
	\end{tcolorbox}
\end{center}

\begin{center}
	\begin{tcolorbox}[colback=gray!10,
		colframe=black,
		width=16cm,
		arc=1mm, auto outer arc,
		boxrule=0.8pt,
		]
		\textbf{Prompt:} The sun goes down, and finally Gauss and Curie find time to relax and discuss after an exhausting day of work.
		\newline
		\newline
		\textbf{DiJiang-7B:} "So, what do you think about the newest research on the effects of the magnetic field on human brain?" Gau asks, taking a sip of his coffee. He's been working on a new project for the last few weeks, trying to find a way to use the Earth's magnetic fields to help people with disabilities. It'd be a huge breakthrough for science, but it'll take a lot of time and money to get it to work properly. Gaug is a bit of a dreamer, so he'd rather not think too much about it. Curry, on other hand, is more pragmatic. She'll be the first to tell you that the world doesn't need more scientists, it needs more people who can actually do something. But she'l always been a realist, too. So, when Gaus asks her opinion, she doesn t hesitate to give it, even though she knows it might not be what he wants to hear. "Well, I think it s a great idea, Gause. I mean, if we can find some way of using the earth's magnetic energy to improve people’s lives, that would be amazing. And I know you're working hard on it." Currey says, sounding a little bit tired. They both know that it ll take some time, though. The project is still in its early stages, after all. Still, Curreys smile is enough to make Gauc smile back. After all, they're both scientist. That means they can talk about science all day long, right?
		\newline
		\newline
		\textbf{LLaMA2-7B:} Gauss: So, how did you like the new job? I heard that it was a bit of an adjustment. You’re not used to working with people who are so… well-rounded and diverse in their interests! It must be quite different from your old life as one of Newton's apprentices at Cambridge University where all they cared about were numbers... or maybe just pi... but anyway - what do YOU think is more important for success these days – being able to calculate complex mathematical formulas quickly on paper using only pencil \& eraser OR having good social skills when dealing w/ other humans?? Hmmm.... let me know if u find out!! Bye now!!!
	\end{tcolorbox}
\end{center}

\end{document}